\newtheorem{theorem}{Theorem}
\newtheorem{lemma}{Lemma}
\newtheorem{remark}{Remark}
\title{A Special Case of Quadratic Extrapolation Under the Neural Tangent Kernel}
\author{
 Abiel J. Kim \\
 %
}
\begin{document}
\maketitle
\begin{abstract}
It has been demonstrated both theoretically and empirically that the ReLU MLP tends to extrapolate linearly for an out-of-distribution evaluation point. The machine learning literature provides ample analysis with respect to the mechanisms to which linearity is induced. However, the analysis of extrapolation at the origin under the NTK regime remains a more unexplored special case. In particular, the infinite-dimensional feature map induced by the neural tangent kernel is not translationally invariant. This means that the study of an out-of-distribution evaluation point very far from the origin is not equivalent to the evaluation of a point very near the origin. And since the feature map is rotation invariant, these two special cases may represent the most canonically extreme bounds of ReLU NTK extrapolation. Ultimately, it is this loose recognition of the two special cases of extrapolation that motivate the discovery of quadratic extrapolation for an evaluation close to the origin.
\end{abstract}


\section{Introduction}

The work of \textcite{DBLP:journals/corr/abs-2009-11848} proves that an over-parameterized ReLU-activated multilayer perceptron (MLP) will extrapolate linearly when evaluated along any direction very distant from the origin. They formally prove extrapolative linearity by analysis of the learned regressor's functional form in the \textit{neural tangent kernel} (NTK) \textit{reproducing kernel hilbert space} (RKHS) \parencite{DBLP:journals/corr/abs-1806-07572}. And, since the infinite dimensional feature map induced by the neural tangent kernel is rotation invariant, the analysis covers the generalizable case of an evaluation point very distant from the origin. However, it is not difficult to recognize that the same feature map is not translation invariant. It is by a geometric reasoning that the origin of the RKHS must be a distinct special case whose analysis departs from Theorem 1 of \textcite{DBLP:journals/corr/abs-2009-11848}. That is, in the limit of a large relative distance between the training point set and the evaluation point, one observes that there must be two special locations of the evaluation point with respect to the NTK induced feature map: A location casted along a singular feature direction, and a location which intersects all feature directions.

It is this recognition of the distinguishable cases that motivates the extrapolative analysis at the origin location. The non translation invariance of the feature map implies that the extrapolative analysis at the origin and far from origin are not equivalent problems. It can be reasoned that they are two canonical cases of a more complete analysis of extrapolation. However, inducing extrapolation at the origin must be done carefully to ensure that the evaluation data is pushed out of the support of the training distribution space. This is achieved by this paper's definition of a labeled training set, which is formally presented in the problem setup of section 2. The desired effect of said definition is to induce a problem setup where all members of the training set are sent infinitely far away from the origin whilst fixing the evaluation data at the origin. Under this variant setting, we state Theorem 1, which discovers that an over-parameterized neural network extrapolates quadratically when evaluated near the origin. This finding contrasts, but does not conflict with, \textcite{DBLP:journals/corr/abs-2009-11848}, which contrastingly concerns itself an evaluation point far from the origin.

The paper is organized as follows. The proof of Theorem 1 is presented in §A.4 and will depend on the results of Lemmas 1 and 2, which are proven with continuity in §A.2 and §A.3 respectively. Our problem setup induces a special case of the NTK gram matrix which must be studied in §A.1 to set the stage for the remainder of the mathematics.


\section{Preliminaries}

\textbf{Background on NTK:} Suppose that a neural network performs nonlinear regression $f(\boldsymbol{\theta}, \boldsymbol{x}) : \mathcal{X} \to \mathbb{R}$ where $\boldsymbol{\theta}$ is a vectorization of the network parameters and $\boldsymbol{x} \in \mathcal{X}$. Let there be $n$ training points which form a labeled set $\{ (\boldsymbol{x}_i, y_i) \}_{i=1}^n$. If we train the network on the labeled set to minimize the squared loss function $\frac{1}{2} \sum_{i=1}^n \left( f(\boldsymbol{\theta}, \boldsymbol{x}_i) - y_i \right)^2$ via gradient descent, then we can derive a kernel method from the network by first considering an affine approximation of the network output in parameter space. If we denote the time-dependent parameter vector induced by gradient descent as $\boldsymbol{\theta}^{(t)}$ for some iteration $t$, then we define the feature map $\phi(\boldsymbol{x})$ as the gradient of the network output with respect to $\boldsymbol{\theta}$ evaluated at $\boldsymbol{\theta}^{(0)}$ denoted as $\nabla_{\boldsymbol{\theta}} f(\boldsymbol{\theta}^{(0)}, \boldsymbol{x})$. The corresponding kernel, called the \textit{neural tangent kernel} (NTK), is then an affine model that is linear in the network parameters. Under particular constraints such as the infinite width and infinitesimal learning rate, the NTK becomes an expectation:
\begin{align*}
    NTK(\boldsymbol{x}_i, \boldsymbol{x}_j) = 
    \mathbb{E}_{\boldsymbol{\theta} \sim \mathcal{N}}
    \left\langle 
        \nabla_{\boldsymbol{\theta}} f(\boldsymbol{\theta}^{(0)}, \boldsymbol{x}_i),
        \nabla_{\boldsymbol{\theta}} f(\boldsymbol{\theta}^{(0)}, \boldsymbol{x}_j)
    \right\rangle,
\end{align*}
where the expectation emerges by the law of large numbers induced by the network's infinite width. Interestingly, the affine approximation is correct under the NTK constraints in parameter space, and is closely tied to the network's notion of \textit{lazy training}. Ultimately, since training is linear in the often high-dimensional, possibly infinite, feature space, the neural network behaves as an affine kernel regression. We take all such pairwise NTK evaluations from the labeled training set to produce the positive semi-definite NTK gram matrix denoted as $NTK_{train}$.

\textbf{Background on Neural Network Extrapolation:} \textcite{DBLP:journals/corr/abs-2009-11848} builds on the established results of the NTK equivalence between neural network training and kernel regression to more precisely analyze extrapolation. However, using the NTK directly requires analysis of the point-wise form as a kernel regression fit over the labeled training set. It can be more advantageous to work in the NTK induced feature space instead to derive a functional representation of the learned network, which may be more analytically manageable. This is precisely the route they take and formalize this equivalence between point-wise NTK regression and the learned function in the NTK induced feature space in their Lemma 2:
\begin{align*}
    f_{NTK}(\boldsymbol{x}) 
    = \phi(\boldsymbol{x})^\top \boldsymbol{\beta}_{\text{NTK}} \\
    \text{where} \quad \boldsymbol{\beta}_{\text{NTK}} &= \min_{\boldsymbol{\beta}} \|\boldsymbol{\beta}\|_2 \\
    \text{s.t.} \quad & \phi(\boldsymbol{x})^\top \boldsymbol{\beta} = y_i \quad \text{for } i=1,\dots,n,
\end{align*}
where $f_{NTK}(\boldsymbol{x}) = \phi(\boldsymbol{x})^\top \boldsymbol{\beta}_{\text{NTK}}$ is the min-norm functional form equivalent to NTK kernel regression fitted over the training data for any $\boldsymbol{x} \in \mathcal{X}$. Further, they derive the precise closed-form of the NTK induced feature map for a ReLU two-layer MLP in their Lemma 3:
\begin{align*}
    \phi(\boldsymbol{x}) 
    = c' 
    \Big( 
        \boldsymbol{x} 
        \cdot \mathbb{I}\! \left( {\boldsymbol{w}^{(k)}}^\top \boldsymbol{x} \geq 0 \right), \,
        {\boldsymbol{w}^{(k)}}^\top \boldsymbol{x} \cdot \mathbb{I}\!\left( {\boldsymbol{w}^{(k)}}^\top \boldsymbol{x} \geq 0 \right), \,
        \ldots
    \Big),
\end{align*}
where $c'$ is a constant, $\mathbb{I}$ is the Heaviside indicator function, and $\boldsymbol{w}^{(k)} \sim \mathcal{N}(\boldsymbol{0}, \boldsymbol{I})$ for $k \to \infty$. By analyzing the functional representation in the NTK RKHS, they discovered that for a labeled training set $\{ ({\boldsymbol{x}}_i, y_i)\}_{i=1}^n$ and evaluation point $\boldsymbol{x}_0 = t \boldsymbol{v}$ for any direction $\boldsymbol{v} \in \mathbb{R}^d$, the network converges to a linear function.

\textbf{Problem Setup:} Our problem setup inherits from both \textcite{DBLP:journals/corr/abs-1806-07572} and \textcite{DBLP:journals/corr/abs-2009-11848} primarily through the notation of the latter for compatibility. Let $\mathcal{X}$ be a $d$-dimensional Euclidean input space and $\varphi$ be a set of $n$ training inputs such that $\varphi = \{ \boldsymbol{x}_i \}_{i=1}^n$ with $\boldsymbol{x}_i \in \mathcal{X}$ for $i \in [n]$. If we translate $\varphi$  by the vector $- t \boldsymbol{v}_\varphi$ for any direction $\boldsymbol{v}_\varphi$, then we will have formed a new set $\varphi^\infty = \{ \boldsymbol{x}_i - t \boldsymbol{v}_\varphi : \boldsymbol{x}_i  \in \varphi \}$ where a member is denoted $\boldsymbol{x}_i^\infty = \boldsymbol{x}_i - t \boldsymbol{v}_\varphi$ for any $\boldsymbol{x}_i^\infty \in \varphi^\infty$. The labeled training set can then be constructed as $\{ (\boldsymbol{x}_i^\infty, y_i^\infty) \}_{i=1}^n$ where $y_i^\infty = g(\boldsymbol{x}_i^\infty)$ for target function $g : \mathcal{X} \to \mathbb{R}$. We train a single-output two-layer ReLU MLP $f_{NTK} : \mathcal{X} \to \mathbb{R}$ in the NTK regime using gradient descent to minimize the squared loss function over the labeled training set. We reintroduce the hat notation which denotes a data vector augmented with bias term: $\hat{\boldsymbol{x}} = \left[ \boldsymbol{x} | 1 \right]$. We also introduce the check notation which denotes the explicit exclusion of the bias weight with respect to the $k$-th hidden neuron, $\check{\boldsymbol{w}}^{(k)} \in \mathbb{R}^d$.

\textit{Clarifying the Training Data:} Please agree that the definition of the labeled training set, which is constructed from $\varphi^\infty$, facilitates an analysis of extrapolation at the origin location. If extrapolation can be configured by defining the labeled training set $\{ ({\boldsymbol{x}}_i, y_i)\}_{i=1}^n$ and the evaluation point $\boldsymbol{x}_0 = t \boldsymbol{v}$ for any direction $\boldsymbol{v}$, then it is not difficult to attain the ``inverted'' setup of the new training set $\{ (\boldsymbol{x}_i-t\boldsymbol{v}, \, g(\boldsymbol{x}_i-t\boldsymbol{v}))\}_{i=1}^n$ and evaluation point $\boldsymbol{x}_0 = \boldsymbol{0}$. Critically, the coordinate shift preserves the sufficient condition that induces extrapolation as $t \to \infty$.

\textit{Clarifying the Notation:} This paper refers to the set $\varphi$ as a (point) realization, which is a convention related to point processes and stochastic geometry \parencite{chiu2013stochastic}. Since the NTK regime deals with finite datasets, it may be useful to explicitly describe or analyze $\varphi$ as a point process in a later related work. The set $\varphi$ may be ascribed an underlying mathematical data generator for the purposes of a neural scaling law analysis, for instance.

\subsection{Related Work}

This paper identifies and deeply explores a special case of nonlinear NTK extrapolation. To the best of our knowledge, since our work is based on \textcite{DBLP:journals/corr/abs-2009-11848} as a special coordinate-shifted case of their problem setup, discovering additional relevant literature in the space of NTK extrapolation is challenging. However, there are previous works that strongly align with the themes of this paper insofar as the exploration of special nonlinear regimes or NTK configurations. One notable work is \textcite{bai2020linearizationquadratichigherorderapproximation} where they discover a special learning process using randomization that results in a dominant quadratic Taylor term as opposed to the standard linear dominance in a Taylor expansion. But, it must be made clear that the results of \textcite{bai2020linearizationquadratichigherorderapproximation} do not specifically address extrapolation. 

Furthermore, various elements of this manuscript align with existing work insofar as the application of mathematical techniques for machine learning. We consider, for instance, how \textcite{rangamani2020interpolatingkernelmachinesminimizing}'s Remark 3 justifies our usage of Tikhononv regularization to pseudo-invert a special geometrically constrained NTK gram matrix in §A.1. Ultimately, this paper is an analysis of asymptotic quadratic extrapolation for over-parameterized neural networks and serves a complementary work to \textcite{DBLP:journals/corr/abs-2009-11848}.

\section{Theoretical Contributions}

\begin{remark}
    If all $n$ inputs of the training set $\varphi^\infty$ are located infinitely far from the origin along the same direction, then the asymptotic pseudo-inverse of the NTK gram matrix is a difference between the identity and all-ones matrix: $\frac{1}{\delta} \boldsymbol{I} - \frac{t^2 \kappa}{\delta(n \kappa t^2 + \delta)} \boldsymbol{J}$, where $\delta \to 0$, $t \to \infty$, and $\kappa$ is a constant.
\end{remark}
\textit{Special Case of the NTK Gram:} We discover this closed-form in §A.1 by first recognizing that under the definition of $\varphi^\infty$, the indicators for any training input become \textit{input agnostic} insofar that the indicating logic strictly depends on a feature direction $\boldsymbol{w}$ and training direction $\boldsymbol{v}_{\varphi}$. The definition of $\varphi^\infty$ induces the otherwise singular asymptotic NTK gram matrix $\kappa t^2 \boldsymbol{J}$. We then use Tikhonov regularization to pseudo-invert this asymptotic NTK gram expressed as $\left( \kappa t^2 \boldsymbol{J} + \boldsymbol{\Gamma} \right)^{-1}$. We leverage this special case of the asymptotic NTK gram matrix induced by $\varphi^\infty$ and its pseudo-inverse to express the components of $\boldsymbol{\beta}_{NTK}$ induced by training inputs $\varphi^\infty$ that are distant from the origin. These results are then used to prove Lemma 2 and ultimately Theorem 1.

\begin{theorem}
An over-parameterized two-layer ReLU MLP $f_{NTK} : \mathbb{R}^{d} \to \mathbb{R}$ that is trained on a labeled set $\{ (\boldsymbol{x}_{i}^\infty, {y}_{i}^\infty) \}_{i=1}^{n}$ with $\boldsymbol{x}_{i}^\infty = \boldsymbol{x}_i - t \boldsymbol{v}_{\varphi}$ for $\boldsymbol{x}_i \in \mathcal{X}$ and any direction $\boldsymbol{v}_{\varphi}$ in the NTK regime minimizing squared loss will converge to a quadratic extrapolator when evaluated at a point near the origin $\boldsymbol{0}$ as $t \to \infty$.
\end{theorem}
\textit{Theorem 1 Proof Sketch:} Theorem 1 is the main contribution of this paper and states that an extremely wide NTK predictor with ReLU activations that is trained on a dataset which is extremely distant from the origin will converge to a quadratic extrapolator when evaluated near the origin. That is, the Theorem 1 states that the predictor's first and second directional derivatives exist and all higher-order derivatives are $0$. And the proof of Theorem 1, which is presented in §A.4, depends on the results of Lemmas 1 and 2. Lemma 1 is a generalized algebraic manipulation and states that the directional derivative of the NTK predictor can be expressed in terms of the derivatives of the indicator. The significance of Lemma 1 is most clear when we leverage the Dirac-delta's so called \textit{sifting property}, also known as the \textit{sampling property}. We note that the derivative of the Heaviside indicator is the Dirac-delta, and applies itself nicely when viewing the predictor's derivative as an integral. Lemma 2 completes the second half of the Theorem 1 proof by stating that the partial derivatives of the beta components with respect to the bias component of a feature direction $\boldsymbol{w}_{d+1}$ are always $0$ for any order derivative past the second. The significance of Lemma 2 is clear when we see in §A.2 that the $z$-th derivative of the predictor depends on the $(z-1)$-th and $(z-2)$-th partial derivatives of the beta components. It is not difficult to see that the quadratic-order persists when taking the $z$-th derivative of $f_{NTK}$.
\begin{lemma}
The feature map of the $z$-th directional derivative of $f_{NTK}$ for any direction $\boldsymbol{v}_{0}$ can be expressed in terms of the $z$-th and $(z-1)$-th directional derivatives of the indicator for $\boldsymbol{v}_{0}$ such that:
\begin{align*}
    &D^{z}_{\boldsymbol{v}} f_{NTK}(\boldsymbol{x}_0)
    = \boldsymbol{\beta}_{\text{NTK}}^\top
    \left(
        \boldsymbol{\hat{x}}_0 \cdot D_{\boldsymbol{v}}^{z} \, \mathbb{I}^{(k)}
        - z \hat{\boldsymbol{v}} \cdot D_{\boldsymbol{v}}^{z-1} \, \mathbb{I}^{(k)}, \,
        {\boldsymbol{w}^{(k)}}^\top \boldsymbol{\hat{x}}_0 \cdot D_{\boldsymbol{v}}^{z} \, \mathbb{I}^{(k)}
        - z {\boldsymbol{w}^{(k)}}^\top \hat{\boldsymbol{v}} \cdot D_{\boldsymbol{v}}^{z-1} \, \mathbb{I}^{(k)}, 
        \, ...
    \right)
\end{align*}
\end{lemma}

\begin{lemma}
The components of the NTK representation coefficient $\boldsymbol{\beta}_{NTK}$ induced by a training input set $\varphi^\infty = \{ \boldsymbol{x}_{i}^\infty \}_{i=1}^{n}$ where $\boldsymbol{x}_{i}^\infty = \boldsymbol{x}_i - t \boldsymbol{v}_{\varphi}$ for some $\boldsymbol{x}_i \in \mathcal{X}$ and any direction $\boldsymbol{v}_{\varphi}$ are constant with respect to the bias component of any given feature direction $\boldsymbol{w}_{d+1}$ such that:
\begin{align*}
    \frac{\partial^z \boldsymbol{\beta}^1_{\boldsymbol{w}}}{\partial \boldsymbol{w}^z_{d+1}},
    \frac{\partial^z \boldsymbol{\beta}^2_{\boldsymbol{w}}}{\partial \boldsymbol{w}^z_{d+1}}
    \to 0 
    \; \text{for all} \; z \ge 1.
\end{align*}
\end{lemma}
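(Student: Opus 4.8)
\emph{Lemma 2 Proof Plan:} The plan is to make the dependence of each beta block on the bias coordinate $\boldsymbol{w}_{d+1}$ fully explicit and then differentiate term by term. Writing the min-norm solution of the background Lemma 2 in its dual form, $\boldsymbol{\beta}_{\text{NTK}} = \Phi^\top \boldsymbol{a}$ with $\boldsymbol{a} = (NTK_{train} + \delta \boldsymbol{I})^{-1} \boldsymbol{y}^\infty$ and $\Phi$ the design whose rows are $\phi(\boldsymbol{x}_i^\infty)$, I would read off from the feature map of the background Lemma 3 that, for a fixed feature direction $\boldsymbol{w}$,
\begin{align*}
    \boldsymbol{\beta}^1_{\boldsymbol{w}} = c' \sum_{i=1}^n a_i \, \hat{\boldsymbol{x}}_i^\infty \, \mathbb{I}_i,
    \qquad
    \boldsymbol{\beta}^2_{\boldsymbol{w}} = c' \sum_{i=1}^n a_i \, s_i \, \mathbb{I}_i,
\end{align*}
where $s_i := {\boldsymbol{w}}^\top \hat{\boldsymbol{x}}_i^\infty = \check{\boldsymbol{w}}^\top \boldsymbol{x}_i^\infty + \boldsymbol{w}_{d+1}$ and $\mathbb{I}_i := \mathbb{I}(s_i \ge 0)$. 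The first observation to record is that the dual weights $a_i$ carry no dependence on $\boldsymbol{w}_{d+1}$: the gram matrix $NTK_{train}$ is an expectation over all feature directions and $\boldsymbol{y}^\infty$ is fixed, so $\boldsymbol{a}$ is constant for the purpose of differentiating in $\boldsymbol{w}_{d+1}$. Hence the only channels through which $\boldsymbol{w}_{d+1}$ enters are the indicators $\mathbb{I}_i$ and, for the second block, the affine pre-activation factor $s_i$.

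For $\boldsymbol{\beta}^1_{\boldsymbol{w}}$ the derivative lands entirely on the indicator, and since $\partial_{\boldsymbol{w}_{d+1}} s_i = 1$ we get $\partial^z_{\boldsymbol{w}_{d+1}} \mathbb{I}_i = \delta_{\mathrm{D}}^{(z-1)}(s_i)$, the $(z{-}1)$-th distributional derivative of the Dirac delta at $s_i$, whence $\partial^z_{\boldsymbol{w}_{d+1}} \boldsymbol{\beta}^1_{\boldsymbol{w}} = c' \sum_i a_i \hat{\boldsymbol{x}}_i^\infty \delta_{\mathrm{D}}^{(z-1)}(s_i)$. The key geometric fact from the $\varphi^\infty$ construction now closes this case: because $s_i = \check{\boldsymbol{w}}^\top \boldsymbol{x}_i - t\, \check{\boldsymbol{w}}^\top \boldsymbol{v}_\varphi + \boldsymbol{w}_{d+1}$, for every feature direction with $\check{\boldsymbol{w}}^\top \boldsymbol{v}_\varphi \neq 0$ the support point of the delta recedes to $\pm\infty$ as $t \to \infty$, so on any bounded range of $\boldsymbol{w}_{d+1}$ each $\delta_{\mathrm{D}}^{(z-1)}(s_i)$ is eventually identically zero. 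This settles all $z \ge 1$ for the first block.

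For $\boldsymbol{\beta}^2_{\boldsymbol{w}}$ I would apply the Leibniz rule to $s_i \mathbb{I}_i$; since $s_i$ is affine in $\boldsymbol{w}_{d+1}$ only its zeroth and first derivatives survive, giving $\partial^z_{\boldsymbol{w}_{d+1}}(s_i \mathbb{I}_i) = s_i \delta_{\mathrm{D}}^{(z-1)}(s_i) + z\, \delta_{\mathrm{D}}^{(z-2)}(s_i)$. Collapsing the first term with the identity $x\, \delta_{\mathrm{D}}^{(k)}(x) = -k\, \delta_{\mathrm{D}}^{(k-1)}(x)$ leaves, for $z \ge 2$, exactly $\delta_{\mathrm{D}}^{(z-2)}(s_i)$, which again vanishes by support escape. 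The delicate case is $z = 1$: here $s_i \delta_{\mathrm{D}}(s_i) = 0$ and the surviving term is $c' \sum_i a_i \mathbb{I}_i$, which does \emph{not} vanish by support escape. Instead I would invoke the $\boldsymbol{J}$-structure of Remark 1: the indicators become input-agnostic, $\mathbb{I}_i \to \mathbb{I}(\check{\boldsymbol{w}}^\top \boldsymbol{v}_\varphi \le 0)$, a constant independent of $i$, so $\sum_i a_i \mathbb{I}_i$ factors as that constant times $\boldsymbol{1}^\top \boldsymbol{a}$; and a direct computation with $\frac{1}{\delta} \boldsymbol{I} - \frac{t^2 \kappa}{\delta(n \kappa t^2 + \delta)} \boldsymbol{J}$ gives $\boldsymbol{1}^\top \boldsymbol{a} = (\boldsymbol{1}^\top \boldsymbol{y}^\infty)/(n \kappa t^2 + \delta) \to 0$ as $t \to \infty$.

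I expect the main obstacle to be twofold. First, the distributional limits need a clean meaning: the statement ``$\delta_{\mathrm{D}}^{(k)}(s_i) \to 0$'' should be read as the contribution to the expectation over feature directions vanishing, since the offending directions (those with $\check{\boldsymbol{w}}^\top \boldsymbol{v}_\varphi = 0$) form a Gaussian-null hyperplane while every other direction drives the delta support to infinity. Second, one must recognize that the $z = 1$ case of the second block is structurally different from every other case, resting on the cancellation $\boldsymbol{1}^\top \boldsymbol{a} \to 0$ forced by the all-ones structure of Remark 1 rather than on support escape. Everything else is bookkeeping with the Leibniz rule and the elementary delta identities.
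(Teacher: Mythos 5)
Your proposal is correct and lands on the same load-bearing computation as the paper, but it arrives there by a different order of limits. The paper first passes to the asymptotic, input-agnostic closed forms of $\boldsymbol{\beta}^1_{\boldsymbol{w}}$ and $\boldsymbol{\beta}^2_{\boldsymbol{w}}$ (its equations (41)--(42)), in which the indicator has already collapsed to $\mathbb{I}\left( \boldsymbol{w}^\top(-\hat{\boldsymbol{v}}) \ge 0 \right)$, a function with no $\boldsymbol{w}_{d+1}$-dependence at all since $\hat{\boldsymbol{v}}_{d+1}=0$; every indicator derivative then dies via the chain-rule factor $-\hat{\boldsymbol{v}}_{d+1}=0$ (equation (76)), and likewise $\partial^z \kappa / \partial \boldsymbol{w}^z_{d+1} = 0$. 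You instead differentiate the finite-$t$ expressions (the paper's (39)--(40)), pick up genuine Dirac terms $\delta^{(z-1)}(s_i)$ with $s_i = \boldsymbol{w}^\top \hat{\boldsymbol{x}}_i^\infty$, and kill them by support escape: the delta sits at $\boldsymbol{w}_{d+1} = t\,\check{\boldsymbol{w}}^\top \boldsymbol{v}_\varphi - \check{\boldsymbol{w}}^\top \boldsymbol{x}_i$, which exits every bounded window for $\mathbb{P}$-a.e.\ $\boldsymbol{w}$ as $t \to \infty$. That buys you something the paper glosses over: a justification for interchanging $t \to \infty$ with $\partial / \partial \boldsymbol{w}_{d+1}$, which the paper implicitly assumes by differentiating an already-asymptotic form. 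Your ReLU identity $\partial^z_{\boldsymbol{w}_{d+1}}(s_i \mathbb{I}_i) = \delta^{(z-2)}(s_i)$ for $z \ge 2$ checks out, and the case you single out as structurally different --- $z=1$ for $\boldsymbol{\beta}^2_{\boldsymbol{w}}$, where the surviving term is $\sum_i a_i \mathbb{I}_i$ --- is exactly the paper's one nontrivial step: your $\boldsymbol{1}^\top \boldsymbol{a} = (\boldsymbol{1}^\top \boldsymbol{y}^\infty)/(n\kappa t^2 + \delta)$ is line-for-line the paper's $g_{sum}/(n \kappa t^2 + \delta)$ in equations (89)--(91). One shared caveat, not a gap relative to the paper: both arguments implicitly require $\sum_i g(\hat{\boldsymbol{x}}_i^\infty) = o(t^2)$ for that term to vanish (a quadratically growing target $g$ would leave a nonzero constant), and the paper is silent on this too. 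A minor divergence in framing: you assert outright that the dual weights $a_i$ carry no $\boldsymbol{w}$-dependence because the gram is an expectation over feature directions --- the cleaner and standard view --- whereas the paper treats $\kappa$ as $\boldsymbol{w}$-dependent and verifies $\partial \kappa / \partial \boldsymbol{w}_{d+1} = 0$ by differentiating under its integral (equations (80)--(83)); the conclusions agree.
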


\section{Conclusion}

This paper identifies a special case of nonlinearity for NTK extrapolation at the origin of the RKHS. More specifically, this paper finds that at the origin, the infinitely-wide two-layer MLP retains a non-zero second-order Taylor term in the limit. The quadratic behavior is highly dependent on the degree of similarity between vector orientations $\boldsymbol{w}$ - which represents the direction of a feature - and $\boldsymbol{v}$ - the vector which defines the evaluation point. If, for instance, the two orientations are orthogonal, then the second derivative is unconditionally zero. The second derivative may also be zero dependent on the beta components, i.e., if the beta 1 component is orthogonal to $\boldsymbol{v}$ and the beta 2 component is zero; However, this condition is less strict. The results are distinct from but complementary to the existing ML literature which primarily concern the linearity of neural network extrapolation. That is, since the feature map induced by the neural tangent kernel is not translation invariant, extrapolation at a point far from the origin is not equivalent to extrapolation a point close to the origin. We prove our results by determining a closed-form of the asymptotic pseudo-inverse NTK gram matrix to determine the components of $\boldsymbol{\beta}_{NTK}$ induced by the definition of $\varphi^\infty$. Then we discover a neat algebraic trick in Lemma 1 to rewrite the directional derivative of the predictor as partial derivatives of the beta components using the distributional derivative equivalence to the directional derivative of the indicator.

\section*{Acknowledgments}
In accordance with the arXiv policy for author usage of generative AI language tools, we report the use of Gemini 2.5 as a first line defense tool to primarily check for potential mistakes in mathematical derivations. The language and format of this manuscript was generally written and prepared without the assistance of a generative AI. The author asserts complete leadership over the formal proofs of this manuscript with all usage of generative AI limited to serving at an assistive capacity. The author hereby assumes full responsibility for the content of this manuscript.

\nocite{*}
\printbibliography

\clearpage

\appendix
\section{Proofs}

\subsection{Special Case of the NTK Gram Matrix}

We begin our analysis by making clear the form of $\boldsymbol{\beta}$, which is the coefficient vector in the NTK RKHS that is fit over the labeled training data. We begin with the point-wise form of NTK regression to write $\boldsymbol{\beta}$ in terms of the NTK gram:
\begin{align}
    & f_{NTK}(\boldsymbol{\hat{x}}) 
    = \big(
        \langle \phi (\boldsymbol{\hat{x}}), \phi (\boldsymbol{\hat{x}}^\infty_1) \rangle, \ldots,
        \langle \phi (\boldsymbol{\hat{x}}), \phi (\boldsymbol{\hat{x}}^\infty_n) \rangle
    \big)^\top
    \cdot \boldsymbol{NTK}_{train}^{-1} \boldsymbol{Y} \\
    &= \phi(\boldsymbol{\hat{x}})^\top \boldsymbol{\Phi}_{train}^\top \boldsymbol{NTK}^{-1}_{train} \boldsymbol{Y} \\
    &= \phi(\boldsymbol{\hat{x}})^\top \boldsymbol{\beta}.
\end{align}
Attaining a closed form expression of $\boldsymbol{NTK}^{-1}_{train}$ is a desirable but non-trivial analysis. Fortunately, later in this section, we will see how the definition of $\varphi^\infty$ induces the NTK gram to a closed-form asymptotic pseudo-inverse. But first, we recognize the application of Tikhonov regularization, which ensures the invertibility of the NTK gram matrix and induces a choice of $\boldsymbol{\beta}$ equivalent to the min-norm definition of the unique $\boldsymbol{\beta}_{NTK}$. Tikhonov regularization was chosen for its simple usage but is also an approach supported by \textcite{rangamani2020interpolatingkernelmachinesminimizing}. We express $\boldsymbol{\beta}_{NTK}$ in terms of the Tikhonov regularized NTK gram matrix: 
\begin{align}
    \boldsymbol{\beta}_{NTK} = 
    \boldsymbol{\Phi}_{train}^\top
    \big(
        \boldsymbol{NTK}_{train} + \boldsymbol{\Gamma}
    \big)^{-1} 
    \boldsymbol{Y}
\end{align}
for Tikhonov matrix $\boldsymbol{\Gamma} = \delta \boldsymbol{I}, \delta \to 0^+$. Before we solve for the pseudo-inverse of $\boldsymbol{NTK}_{train}$, we take note of the induced behavior of the indication function for a training data point under the definition of $\varphi^\infty$, where we find that the indication depends solely on the dot product between any particular feature direction $\boldsymbol{w}$ and the special $\boldsymbol{v}_\varphi$ that translates $\varphi$. In other words, under the definition of $\varphi^\infty$, ReLU indicators for any training data point $\boldsymbol{x}_i^\infty \in \varphi^\infty$ become input agnostic insofar that they become independent from $\boldsymbol{x}_i \in \varphi$:
\begin{align}
    & \mathbb{I} \left( {\boldsymbol{w}}^\top \boldsymbol{\hat{x}}^\infty_i \geq 0 \right) \\
    &= \mathbb{I} \left( {\boldsymbol{w}}^\top (\boldsymbol{\hat{x}}_i - t\boldsymbol{\hat{v}}) \geq 0 \right) \\
    &= \mathbb{I} \left( {\boldsymbol{w}}^\top (-\boldsymbol{\hat{v}}) \geq 0 \right).
\end{align}
The independence that arises between the indicators and the training inputs is a crucial insight and will be a recurring assistance that enables the pseudo-inversion of the NTK gram. Speaking of which, by definition of the neural tangent kernel, the $(i,j)$-th entry of $\boldsymbol{NTK}_{train}$ can be expressed as:
\begin{align}
    & \boldsymbol{NTK}_{train}[i, j] = \left< \phi(\boldsymbol{\hat{x}}^\infty_i), \phi(\boldsymbol{\hat{x}}^\infty_j) \right> \\
    &= \int \boldsymbol{\hat{x}}^\infty_i \cdot \boldsymbol{\hat{x}}^\infty_j \cdot
    \mathbb{I} \left( {\boldsymbol{w}}^\top \boldsymbol{\hat{x}}^\infty_i \geq 0 \right) \cdot
    \mathbb{I} \left( {\boldsymbol{w}}^\top \boldsymbol{\hat{x}}^\infty_j \geq 0 \right) \\
    &+ ({\boldsymbol{w}^{(k)}}^\top \boldsymbol{\hat{x}}^\infty_i) \cdot 
    ({\boldsymbol{w}^{(k)}}^\top \boldsymbol{\hat{x}}^\infty_j) \cdot
    \mathbb{I} \left( {\boldsymbol{w}}^\top \boldsymbol{\hat{x}}^\infty_i \geq 0 \right) \cdot
    \mathbb{I} \left( {\boldsymbol{w}}^\top \boldsymbol{\hat{x}}^\infty_j \geq 0 \right) d \mathbb{P}(\boldsymbol{w})
\end{align}
for any pair $( \boldsymbol{x_i^\infty}, \boldsymbol{x_j^\infty} )$ taken from the labeled training set. We observe the emergence of an indication pair in lines (9)-(10). But since indicators become input agnostic, we greatly simplify their indicating logic using equation (7):
\begin{align}
    &   \mathbb{I} \left( {\boldsymbol{w}}^\top \boldsymbol{\hat{x}}^\infty_i \geq 0 \right) \cdot
        \mathbb{I} \left( {\boldsymbol{w}}^\top \boldsymbol{\hat{x}}^\infty_j \geq 0 \right) \\
    &=  \mathbb{I} \left( {\boldsymbol{w}}^\top \boldsymbol{\hat{x}}_i - t ({\boldsymbol{w}}^\top \boldsymbol{\hat{v}}) \geq 0 \right) \cdot
        \mathbb{I} \left( {\boldsymbol{w}}^\top \boldsymbol{\hat{x}}_j - t ({\boldsymbol{w}}^\top \boldsymbol{\hat{v}}) \geq 0 \right) \\
    &= \mathbb{I} \left( {\boldsymbol{w}}^\top (-\boldsymbol{\hat{v}}) \geq 0 \right)^2 \\
    &= \mathbb{I} \left( {\boldsymbol{w}}^\top (-\boldsymbol{\hat{v}}) \geq 0 \right).
\end{align}
Then we apply the definition of $\varphi^\infty$ to expand the dot product in equation (9):
\begin{align}
    & \boldsymbol{\hat{x}}^\infty_i \cdot \boldsymbol{\hat{x}}^\infty_j \\
    &= (\boldsymbol{\hat{x}}_i - t \boldsymbol{\hat{v}}) \cdot (\boldsymbol{\hat{x}}_j - t \boldsymbol{\hat{v}}) \\
    &= \boldsymbol{\hat{v}}^2 t^2
    - (\boldsymbol{\hat{x}}_i + \boldsymbol{\hat{x}}_j) \cdot \boldsymbol{\hat{v}} t
    + \boldsymbol{\hat{x}}_i \cdot \boldsymbol{\hat{x}}_j,
\end{align}
as well as the dot product in equation (10):
\begin{align}
    & ({\boldsymbol{w}}^\top \boldsymbol{\hat{x}}^\infty_i) \cdot 
    ({\boldsymbol{w}}^\top \boldsymbol{\hat{x}}^\infty_j) \\
    &= ({\boldsymbol{w}}^\top (\boldsymbol{\hat{x}}_i - t \boldsymbol{\hat{v}})) \cdot 
    ({\boldsymbol{w}}^\top (\boldsymbol{\hat{x}}_j - t \boldsymbol{\hat{v}}) \\
    &= ({\boldsymbol{w}}^\top \boldsymbol{\hat{v}})^2 t^2 
    - {\boldsymbol{w}}^\top \boldsymbol{\hat{v}} ({\boldsymbol{w}}^\top \boldsymbol{\hat{x}}_i + {\boldsymbol{w}}^\top \boldsymbol{\hat{x}}_j) t
    + ({\boldsymbol{w}}^\top \boldsymbol{\hat{x}}_i) \cdot ({\boldsymbol{w}}^\top \boldsymbol{\hat{x}}_j),
\end{align}
to rewrite the $(i, j)$-th entry of the NTK gram matrix using lines (14), (17), and (20) as:
\begin{align}
    & \boldsymbol{NTK}_{train}[i, j] \\
    &= t^2 \int 
        (\boldsymbol{\hat{v}}^2 + ({\boldsymbol{w}}^\top \boldsymbol{\hat{v}})^2) 
        \cdot \mathbb{I} \left( {\boldsymbol{w}}^\top (-\boldsymbol{\hat{v}}) \geq 0 \right)
    d \mathbb{P}(\boldsymbol{w}) \\
    &- t \int 
        ((\boldsymbol{\hat{x}}_i + \boldsymbol{\hat{x}}_j) \cdot \boldsymbol{\hat{v}} + {\boldsymbol{w}}^\top \boldsymbol{\hat{v}} ({\boldsymbol{w}}^\top \boldsymbol{\hat{x}}_i + {\boldsymbol{w}}^\top \boldsymbol{\hat{x}}_j))
        \cdot \mathbb{I} \left( {\boldsymbol{w}}^\top (-\boldsymbol{\hat{v}}) \geq 0 \right)
    d \mathbb{P}(\boldsymbol{w}) \\
    &+ \int 
        (\boldsymbol{\hat{x}}_i \cdot \boldsymbol{\hat{x}}_j + ({\boldsymbol{w}}^\top \boldsymbol{\hat{x}}_i) \cdot ({\boldsymbol{w}}^\top \boldsymbol{\hat{x}}_j))
        \cdot \mathbb{I} \left( {\boldsymbol{w}}^\top (-\boldsymbol{\hat{v}}) \geq 0 \right)
    d \mathbb{P}(\boldsymbol{w}).
\end{align}
The quadratic form that emerges in lines (22)-(24) is a direct consequence of applying the definition of $\varphi^\infty$; It defines the NTK gram matrix induced by the limiting training set. It is a beautiful structure because the leading-order term is the only term in the quadratic form that does not depend on indices $i$ and $j$. Without this particular structure, pseudo-inverting the matrix $(\boldsymbol{NTK}_{train} + \boldsymbol{\Gamma})$ for a closed-form would be a more difficult analysis. Since line (22) is the leading order term, the resulting form intuitively suggests that as $\varphi$ is shifted further from the origin along some direction $\boldsymbol{v}_\varphi$, the kernel regression solution depends less on the inputs of $\varphi$ and more on the direction $\boldsymbol{v}_\varphi$:
\begin{align}
    &\boldsymbol{NTK}_{train}[i, j] \asymp t^2 \kappa,
\end{align}
where $\kappa$ is a constant equal to the integral of line (22). Therefore, in the limit as $t \to \infty$, we find that the $(i,j)$-th entry of the NTK gram does not depend on $\varphi$. The asymptotic form is then a constant matrix, meaning that $\boldsymbol{NTK}_{train}[i,j]$ is constant for any pair $(i,j)$. We can finally invert the regularized NTK gram from line (4) as:
\begin{align}
    & \left( \boldsymbol{NTK}_{train} + \boldsymbol{\Gamma} \right)^{-1} \\
    &\asymp \left( t^2 \kappa \boldsymbol{J} + \boldsymbol{\Gamma} \right)^{-1} \\
    &= \left( \delta \boldsymbol{I} + (t^2  \boldsymbol{1}) (\kappa \boldsymbol{1})^\top \right)^{-1} \\
    &= \frac{1}{\delta} \boldsymbol{I} - \frac{t^2 \kappa}{\delta(n \kappa t^2 + \delta)} \boldsymbol{J},
\end{align}
where $\boldsymbol{J}[i,j]=1$ for any pair of indices $(i,j)$. The penultimate equality has $\delta \boldsymbol{I}$ from our definition of $\boldsymbol{\Gamma}$ with the outer product between $t^2 \boldsymbol{1}$ and $\kappa \boldsymbol{1}$. In the final equality, one inverts the matrix using the Sherman-Morrison formula \parencite{maponi2007solution}. It follows from line (29) that the $(i,j)$-th entry of the NTK gram asymptotic pseudo-inverse is:
\begin{align}
    \left( \boldsymbol{NTK}_{train} + \boldsymbol{\Gamma} \right)^{-1}[i,j]
    \asymp \begin{cases}
    - \frac{\kappa t^2 }{\delta(n \kappa t^2 + \delta)}, & \text{if } i \ne j \\
    \frac{1}{\delta} - \frac{\kappa t^2 }{\delta(n \kappa t^2 + \delta)}, & \text{if } i = j
    \end{cases}
\end{align}
Using the piecewise definition of equation (30), let $\boldsymbol{\alpha}_{NTK} \asymp \left( \frac{1}{\delta} \boldsymbol{I} - \frac{t^2 \kappa}{\delta(n \kappa t^2 + \delta)} \boldsymbol{J} \right) \boldsymbol{Y}$ denote the matrix-vector product between the label vector $\boldsymbol{Y}$ and the asymptotic pseudo-inverse. Note that $\boldsymbol{\alpha}_{NTK}$ is sub-scripted as such so that the applied regularization is explicit. It is not difficult to calculate the closed-form of the $i$-th entry of $\boldsymbol{\alpha}_{NTK}$:
\begin{align}
    & \boldsymbol{\alpha}_{NTK}[i] = \left( \frac{1}{\delta} \boldsymbol{I} - \frac{t^2 \kappa}{\delta(n \kappa t^2 + \delta)} \boldsymbol{J} \right)[i] \cdot \boldsymbol{Y} \\
    &= \sum^{n}_{j=1} \left( \frac{1}{\delta} \boldsymbol{I} - \frac{t^2 \kappa}{\delta(n \kappa t^2 + \delta)} \boldsymbol{J} \right)[i,j] \cdot \boldsymbol{Y}[j] \\
    &= \sum_{j=1}^n \left( - \frac{t^2 \kappa}{\delta(n \kappa t^2 + \delta)} \right) g(\boldsymbol{\hat{x}}^\infty_j) 
    + \frac{1}{\delta} g(\boldsymbol{\hat{x}}^\infty_i) \\
    &= - \frac{t^2 \kappa}{\delta(n \kappa t^2 + \delta)} \sum_{j=1}^n g(\boldsymbol{\hat{x}}^\infty_j) 
    + \frac{1}{\delta} g(\boldsymbol{\hat{x}}^\infty_i).
\end{align}
And, it should be made clear the values of the $\boldsymbol{\beta}$ components. There are two components associated with a feature direction $\boldsymbol{w}^{(k)}$ for any $k$. We follow the notation of \textcite{DBLP:journals/corr/abs-2009-11848} and denote the first (vector) beta component as $\boldsymbol{\beta}_{\boldsymbol{w}}^{1}$ and the second (scalar) beta component as $\boldsymbol{\beta}_{\boldsymbol{w}}^{2}$ denoting $\boldsymbol{w}$ as a shorthand for any particular $\boldsymbol{w}^{(k)}$. See line (38) below:
\begin{align}
    &= \boldsymbol{\Phi}_{train}^\top \boldsymbol{\alpha} \\
    &= \boldsymbol{\alpha}_1 \phi (\boldsymbol{\hat{x}}^\infty_1)
    + \boldsymbol{\alpha}_2 \phi (\boldsymbol{\hat{x}}^\infty_2)
    + ...
    + \boldsymbol{\alpha}_n \phi (\boldsymbol{\hat{x}}^\infty_n) \\
    &= \boldsymbol{\alpha}_1 
    \begin{bmatrix}
         \boldsymbol{\hat{x}}^\infty_1 \cdot 
         \mathbb{I} \left( {\boldsymbol{w}^{(k)}}^\top \boldsymbol{\hat{x}}^\infty_1 \geq 0 \right)
         \\
         {\boldsymbol{w}^{(k)}}^\top \boldsymbol{\hat{x}}^\infty_1 \cdot 
         \mathbb{I} \left( {\boldsymbol{w}^{(k)}}^\top \boldsymbol{\hat{x}}^\infty_1 \geq 0 \right)
         \\
         \vdots
    \end{bmatrix}
    + ...
    + \boldsymbol{\alpha}_n
    \begin{bmatrix}
         \boldsymbol{\hat{x}}^\infty_n \cdot 
         \mathbb{I} \left( {\boldsymbol{w}^{(k)}}^\top \boldsymbol{\hat{x}}^\infty_n \geq 0 \right)
         \\
         {\boldsymbol{w}^{(k)}}^\top \boldsymbol{\hat{x}}^\infty_n \cdot 
         \mathbb{I} \left( {\boldsymbol{w}^{(k)}}^\top \boldsymbol{\hat{x}}^\infty_n \geq 0 \right)
         \\
         \vdots
    \end{bmatrix} \\
    &= \begin{bmatrix}
         \boldsymbol{\alpha}_1 \boldsymbol{\hat{x}}^\infty_1 
         \mathbb{I} \left( {\boldsymbol{w}^{(k)}}^\top \boldsymbol{\hat{x}}^\infty_1 \geq 0 \right)
         + ... +
         \boldsymbol{\alpha}_n \boldsymbol{\hat{x}}^\infty_n
         \mathbb{I} \left( {\boldsymbol{w}^{(k)}}^\top \boldsymbol{\hat{x}}^\infty_n \geq 0 \right)
         \\
         \boldsymbol{\alpha}_1 {\boldsymbol{w}^{(k)}}^\top \boldsymbol{\hat{x}}^\infty_1 
         \mathbb{I} \left( {\boldsymbol{w}^{(k)}}^\top \boldsymbol{\hat{x}}^\infty_1 \geq 0 \right)
         + ... +
         \boldsymbol{\alpha}_n {\boldsymbol{w}^{(k)}}^\top \boldsymbol{\hat{x}}^\infty_n
         \mathbb{I} \left( {\boldsymbol{w}^{(k)}}^\top \boldsymbol{\hat{x}}^\infty_n \geq 0 \right)
         \\
         \boldsymbol{\alpha}_1 \boldsymbol{\hat{x}}^\infty_1 
         \mathbb{I} \left( {\boldsymbol{w}^{(k+1)}}^\top \boldsymbol{\hat{x}}^\infty_1 \geq 0 \right)
         + ... +
         \boldsymbol{\alpha}_n \boldsymbol{\hat{x}}^\infty_n
         \mathbb{I} \left( {\boldsymbol{w}^{(k+1)}}^\top \boldsymbol{\hat{x}}^\infty_n \geq 0 \right)
         \\
         \boldsymbol{\alpha}_1 {\boldsymbol{w}^{(k+1)}}^\top \boldsymbol{\hat{x}}^\infty_1 
         \mathbb{I} \left( {\boldsymbol{w}^{(k+1)}}^\top \boldsymbol{\hat{x}}^\infty_1 \geq 0 \right)
         + ... +
         \boldsymbol{\alpha}_n {\boldsymbol{w}^{(k+1)}}^\top \boldsymbol{\hat{x}}^\infty_n
         \mathbb{I} \left( {\boldsymbol{w}^{(k+1)}}^\top \boldsymbol{\hat{x}}^\infty_n \geq 0 \right)
         \\
         \vdots
    \end{bmatrix}.
\end{align}
It follows from line (38) that the components of $\boldsymbol{\beta}_{NTK}$ can be written as:
\begin{align}
    & \boldsymbol{\beta}^1_{\boldsymbol{w}} = \sum_{i=1}^n \boldsymbol{\alpha}_{NTK}[i] \, \boldsymbol{\hat{x}}^\infty_i \,
    \mathbb{I} \left( {\boldsymbol{w}}^\top \boldsymbol{\hat{x}}^\infty_i \geq 0 \right) \\
    & \boldsymbol{\beta}^2_{\boldsymbol{w}} = \sum_{i=1}^n \boldsymbol{\alpha}_{NTK}[i] \, {\boldsymbol{w}}^\top \boldsymbol{\hat{x}}^\infty_i \,
    \mathbb{I} \left( {\boldsymbol{w}}^\top \boldsymbol{\hat{x}}^\infty_i \geq 0 \right).
\end{align}
Lastly, we use lines (7), (34), and the definition of $\varphi^\infty$, to finally rewrite equations (39)-(40) for a closed-form of the first and second beta components that are induced by the definition of $\varphi^\infty$:
\begin{align}
    &\boldsymbol{\beta}^1_{\boldsymbol{w}} =
    \mathbb{I} \left( {\boldsymbol{w}}^\top (-\boldsymbol{\hat{v}}) \geq 0 \right)
    \cdot 
    \left(
        C(t, \delta, \kappa)
        \sum_{j=1}^n g(\boldsymbol{\hat{x}}_i^\infty)
        \sum_{i=1}^n \boldsymbol{\hat{x}}_i^\infty
        +
        \frac{1}{\delta}
        \sum_{i=1}^n \boldsymbol{\hat{x}}_i^\infty g(\boldsymbol{\hat{x}}_i^\infty)
    \right) \\
    &\boldsymbol{\beta}^2_{\boldsymbol{w}} =
    \mathbb{I} \left( {\boldsymbol{w}}^\top (-\boldsymbol{\hat{v}}) \geq 0 \right)
    \cdot 
    \left(
        C(t, \delta, \kappa)
        \sum_{j=1}^n g(\boldsymbol{\hat{x}}_i^\infty)
        \sum_{i=1}^n \boldsymbol{w}^\top \boldsymbol{\hat{x}}_i^\infty
        +
        \frac{1}{\delta}
        \sum_{i=1}^n \boldsymbol{w}^\top \boldsymbol{\hat{x}}_i^\infty g(\boldsymbol{\hat{x}}_i^\infty)
    \right)
\end{align}
where $C(t, \delta, \kappa) = - \frac{t^2 \kappa}{\delta(n \kappa t^2 + \delta)}$ with $t \to \infty$, $\delta \to 0^+$, and $\kappa$ depending on $\boldsymbol{w}$. One final note as an aside is that we can digress into a separate but related analysis if we take the target $g$ to be linear, i.e., we can apply the equivalence $g(\boldsymbol{\hat{x}}_i^\infty) = g(\boldsymbol{\hat{x}}_i) - t g(\boldsymbol{\hat{v}}_\varphi)$ to lines (41)-(42) and analyze unrelated forms. Although this is irrelevant to the paper, it may lead to an alternate proof of somewhat interesting findings.

\subsection{Proof of Lemma 1}

If $\boldsymbol{x}_0 \in \mathcal{X}$ is an evaluation point then let $\boldsymbol{x}_1 = \boldsymbol{x}_0 + h\boldsymbol{v}$ for some direction $\boldsymbol{v}$. We can compute the $z$-th directional derivative of $f_{NTK}$ recursively using the standard limit definition:
\begin{align}
    D^{z}_{\boldsymbol{v}} f_{NTK}(\boldsymbol{x}_0)
    = \lim_{h \to 0} \frac{D^{z-1}_{\boldsymbol{v}} f_{NTK}(\boldsymbol{x}_1) - D^{z-1}_{\boldsymbol{v}} f_{NTK}(\boldsymbol{x}_0)}{h}.
\end{align}
Using the definition of $f_{NTK}$ we can expand the numerator of equation (43) for the first directional derivative:
\begin{align}
    & f_{NTK}(\boldsymbol{\hat{x}}_1) - f_{NTK}(\boldsymbol{\hat{x}}_0) \\
    &= \boldsymbol{\beta}_{NTK}^\top\
    \big(
        \boldsymbol{\hat{x}}_1 \cdot \mathbb{I}_1^{(k)}
        - \boldsymbol{\hat{x}}_0 \cdot \mathbb{I}_0^{(k)},
        {\boldsymbol{w}^{(k)}}^\top \boldsymbol{\hat{x}}_1 \cdot \mathbb{I}_1^{(k)}
        - {\boldsymbol{w}^{(k)}}^\top \boldsymbol{\hat{x}}_0 \cdot \mathbb{I}_0^{(k)},
        \, \ldots
    \big),
\end{align}
where $\mathbb{I}_0^{(k)} = \mathbb{I} \left( {\boldsymbol{w}^{(k)}}^\top \boldsymbol{\hat{x}}_0 \ge 0 \right)$, $\mathbb{I}_1^{(k)} = \mathbb{I} \left( {\boldsymbol{w}^{(k)}}^\top \boldsymbol{\hat{x}}_1 \ge 0 \right)$, $...$ are defined for notational brevity. The numerator for the second directional derivative is similarly expanded, omitting an $h$ which has been factored out:
\begin{align}
    & (f_{NTK}(\boldsymbol{\hat{x}}_2) - f_{NTK}(\boldsymbol{\hat{x}}_1))-(f_{NTK}(\boldsymbol{\hat{x}}_1) - f_{NTK}(\boldsymbol{\hat{x}}_0)) \\
    &= \boldsymbol{\beta}_{NTK}^\top\
    \big(
        (\boldsymbol{\hat{x}}_2 \cdot \mathbb{I}_2^{(k)}
        - \boldsymbol{\hat{x}}_1 \cdot \mathbb{I}_1^{(k)})
        - (\boldsymbol{\hat{x}}_1 \cdot \mathbb{I}_1^{(k)}
        - \boldsymbol{\hat{x}}_0 \cdot \mathbb{I}_0^{(k)}), \\
        &
        ({\boldsymbol{w}^{(k)}}^\top \boldsymbol{\hat{x}}_2 \cdot \mathbb{I}_2^{(k)}
        - {\boldsymbol{w}^{(k)}}^\top \boldsymbol{\hat{x}}_1 \cdot \mathbb{I}_1^{(k)})
        - {\boldsymbol{w}^{(k)}}^\top \boldsymbol{\hat{x}}_1 \cdot \mathbb{I}_1^{(k)}
        - {\boldsymbol{w}^{(k)}}^\top \boldsymbol{\hat{x}}_0 \cdot \mathbb{I}_0^{(k)},
        \, \ldots
    \big)
    \\
    &= \boldsymbol{\beta}_{NTK}^\top\
    \big(
        \boldsymbol{\hat{x}}_2 \cdot \mathbb{I}_2^{(k)}
        - 2 \boldsymbol{\hat{x}}_1 \cdot \mathbb{I}_1^{(k)}
        + \boldsymbol{\hat{x}}_0 \cdot \mathbb{I}_0^{(k)} , \\
        &
        {\boldsymbol{w}^{(k)}}^\top \boldsymbol{\hat{x}}_2 \cdot \mathbb{I}_2^{(k)}
        - 2 {\boldsymbol{w}^{(k)}}^\top \boldsymbol{\hat{x}}_1 \cdot \mathbb{I}_1^{(k)}
        + {\boldsymbol{w}^{(k)}}^\top \boldsymbol{\hat{x}}_0 \cdot \mathbb{I}_0^{(k)},
        \, \ldots
    \big),
\end{align}
and so forth. The point is that the $z$-th directional derivative of $f_{NTK}$ will contain the terms $\boldsymbol{x}_{0}, \boldsymbol{x}_{1}, ..., \boldsymbol{x}_{z}$ where $\boldsymbol{x}_{z} = \boldsymbol{x}_{0} + zh \boldsymbol{v}$ where we repeatedly differentiate along the same direction $\boldsymbol{v}$.

Next, let ${\Sigma}^{(z)}_{\boldsymbol{\hat{x}}, \mathbb{I}^{(k)}}$ be defined as:
\begin{align}
    {\Sigma}^{(z)}_{\boldsymbol{\hat{x}}, \mathbb{I}^{(k)}}
    = P_{z}^{(z)} \boldsymbol{\hat{x}}_z \mathbb{I}_z^{(k)}
    + P_{z-1}^{(z)} \boldsymbol{\hat{x}}_{z-1} \mathbb{I}_{z-1}^{(k)}
    + ... 
    + P_{0}^{(z)} \boldsymbol{\hat{x}}_0 \mathbb{I}_0^{(k)},
\end{align}
where the coefficients $P_{z}^{(z)}, P_{z-1}^{(z)}, ..., P_0^{(z)}$ represent the sign-alternating Pascal coefficients of the $z$-th line in a $0$-indexed Pascal triangle, e.g., $P_1^{(1)}=1$ and $P_0^{(1)}=-1$. We can now generally rewrite the $z$-th directional derivative of $f_{NTK}$ using equation (51) as: 
\begin{align}
    & D^{z}_{\boldsymbol{v}} f(\boldsymbol{\hat{x}}_0) = 
    \lim_{h \to 0}
    \frac{
    \boldsymbol{\beta}_{NTK}^\top
    \big(
        {\Sigma}^{(z)}_{\boldsymbol{\hat{x}}, \mathbb{I}^{(k)}},    
        {\boldsymbol{w}^{(k)}}^\top {\Sigma}^{(z)}_{\boldsymbol{\hat{x}}, \mathbb{I}^{(k)}}
        , \, ...
    \big)}
    {h^{z}}.
\end{align}
The last definition in preparation for the proof of Lemma 1 will be the sum ${\Sigma}^{(z)}_{\mathbb{I}^{(k)}}$, defined in terms of the indicators:
\begin{align}
    {\Sigma}^{(z)}_{\mathbb{I}^{(k)}}
    = P_{z}^{(z)} \mathbb{I}_z^{(k)}
    + P_{z-1}^{(z)} \mathbb{I}_{z-1}^{(k)}
    + ... 
    + P_{0}^{(z)} \mathbb{I}_0^{(k)}.
\end{align}

\begin{lemma}
The feature map of the $z$-th directional derivative of $f_{NTK}$ for any direction $\boldsymbol{v}_{0}$ can be expressed in terms of the $z$-th and $(z-1)$-th directional derivatives of the indicator for $\boldsymbol{v}_{0}$ such that:
\begin{align*}
    &D^{z}_{\boldsymbol{v}} f_{NTK}(\boldsymbol{x}_0)
    = \boldsymbol{\beta}_{\text{NTK}}^\top
    \left(
        \boldsymbol{\hat{x}}_0 \cdot D_{\boldsymbol{v}}^{z} \, \mathbb{I}^{(k)}
        - z \hat{\boldsymbol{v}} \cdot D_{\boldsymbol{v}}^{z-1} \, \mathbb{I}^{(k)}, \,
        {\boldsymbol{w}^{(k)}}^\top \boldsymbol{\hat{x}}_0 \cdot D_{\boldsymbol{v}}^{z} \, \mathbb{I}^{(k)}
        - z {\boldsymbol{w}^{(k)}}^\top \hat{\boldsymbol{v}} \cdot D_{\boldsymbol{v}}^{z-1} \, \mathbb{I}^{(k)}, 
        \, ...
    \right)
\end{align*}
\end{lemma}

\begin{proof}
The first term of ${\Sigma}^{(z)}_{\boldsymbol{\hat{x}}, \mathbb{I}^{(k)}}$ is $P_{z}^{(z)} \boldsymbol{\hat{x}}_z \mathbb{I}_z^{(k)}$. Since $P_{z}^{(z)}$ always lies on the left edge of the Pascal triangle, we always have $P_{z}^{(z)} \boldsymbol{\hat{x}}_z \mathbb{I}_z^{(k)} = \boldsymbol{\hat{x}}_z \mathbb{I}_z^{(k)}$. We use the trick
\begin{align}
    & \boldsymbol{\hat{x}}_z \mathbb{I}_z^{(k)} \\
    &= \boldsymbol{\hat{x}}_z 
        (
            \mathbb{I}_z^{(k)}
            + ({\Sigma}^{(z)}_{\mathbb{I}^{(k)}} - \mathbb{I}_z^{(k)})
            - ({\Sigma}^{(z)}_{\mathbb{I}^{(k)}} - \mathbb{I}_z^{(k)})
        ) \\
    &= \boldsymbol{\hat{x}}_z \cdot {\Sigma}^{(z)}_{\mathbb{I}^{(k)}}
    - \boldsymbol{\hat{x}}_z \cdot 
    (
    {\Sigma}^{(z)}_{\mathbb{I}^{(k)}}
    - \mathbb{I}_z^{(k)}
    )
\end{align}
so that
\begin{align}
    & {\Sigma}^{(z)}_{\boldsymbol{\hat{x}}, \mathbb{I}^{(k)}} \\
    &= 
    \boldsymbol{\hat{x}}_z \cdot {\Sigma}^{(z)}_{\mathbb{I}^{(k)}}
    - \boldsymbol{\hat{x}}_z \cdot 
    (
        {\Sigma}^{(z)}_{\mathbb{I}^{(k)}}
        - \mathbb{I}_z^{(k)}
    )
    +
    (
        {\Sigma}^{(z)}_{\boldsymbol{\hat{x}}, \mathbb{I}^{(k)}}
        - \boldsymbol{\hat{x}}_z \mathbb{I}_z^{(k)}
    ) 
    \\
    &= 
    \boldsymbol{\hat{x}}_z \cdot {\Sigma}^{(z)}_{\mathbb{I}^{(k)}}
    + {\Sigma}^{(z)}_{\boldsymbol{\hat{x}}, \mathbb{I}^{(k)}}
    - \boldsymbol{\hat{x}}_z \cdot {\Sigma}^{(z)}_{\mathbb{I}^{(k)}}
    \\
    &= \boldsymbol{\hat{x}}_z \cdot {\Sigma}^{(z)}_{\mathbb{I}^{(k)}} 
    + (
            P_{z-1}^{(z)} \boldsymbol{\hat{x}}_{z-1} \mathbb{I}_{z-1}^{(k)} -
            P_{z-1}^{(z)} \boldsymbol{\hat{x}}_{z} \mathbb{I}_{z-1}^{(k)}
        )
    + ...
    + (
            P_{0}^{(z)} \boldsymbol{\hat{x}}_{0} \mathbb{I}_{0}^{(k)} -
            P_{0}^{(z)} \boldsymbol{\hat{x}}_{z} \mathbb{I}_{0}^{(k)}
        )
    \\
    &= \boldsymbol{\hat{x}}_z \cdot {\Sigma}^{(z)}_{\mathbb{I}^{(k)}}
    + P_{z-1}^{(z)} \mathbb{I}_{z-1}^{(k)} (\boldsymbol{\hat{x}}_{z-1} - \boldsymbol{\hat{x}}_{z})
    + ...
    + P_{0}^{(z)} \mathbb{I}_{0}^{(k)} (\boldsymbol{\hat{x}}_{0} - \boldsymbol{\hat{x}}_{z})
    \\
    &= \boldsymbol{\hat{x}}_z \cdot {\Sigma}^{(z)}_{\mathbb{I}^{(k)}}
    + P_{z-1}^{(z)} \mathbb{I}_{z-1}^{(k)} ([-h \boldsymbol{v} \,|\, 0])
    + ...
    + P_{0}^{(z)} \mathbb{I}_{0}^{(k)} ([-zh \boldsymbol{v} \,|\, 0])
    \\
     &= \boldsymbol{\hat{x}}_z \cdot {\Sigma}^{(z)}_{\mathbb{I}^{(k)}}
     - \left(
            \sum_{i=0}^{z-1} P_{i}^{(z)} \mathbb{I}_i^{(k)} (z-i) h [\boldsymbol{v} \,|\, 0]
        \right)
    \\
     &= \boldsymbol{\hat{x}}_z \cdot {\Sigma}^{(z)}_{\mathbb{I}^{(k)}}
     - \left(
            h [\boldsymbol{v} \,|\, 0]
            \sum_{i=0}^{z-1} P_{i}^{(z)} \mathbb{I}_i^{(k)} (z-i)
        \right)
    \\
    &= \boldsymbol{\hat{x}}_z \cdot {\Sigma}^{(z)}_{\mathbb{I}^{(k)}}
     - \left(
            h [\boldsymbol{v} \,|\, 0] 
            \sum_{i=0}^{z-1} z P_{i}^{(z-1)} \mathbb{I}_i^{(k)}
        \right)
    \\
    &= \boldsymbol{\hat{x}}_z \cdot {\Sigma}^{(z)}_{\mathbb{I}^{(k)}}
     - z h [\boldsymbol{v} \,|\, 0] \cdot {\Sigma}^{(z-1)}_{\mathbb{I}^{(k)}}
\end{align}
where we use the algebraic trick of lines (54)-(56) to get a secondary trick on line (59), which would be otherwise more difficult to see. Then, lines (60)-(62) follow from the definitions of lines (51) and (53). Another critical step in the proof sequence is the penultimate equality which realizes the equivalence between $P_{i}^{(z)} (z-i)$ and $z P_{i}^{(z-1)}$ for $i = 0, ..., z-1$. This equivalence finds a correspondence from a coefficient on the $z$-th line of the Pascal triangle to the number on the left in the previous $(z-1)$-th line. And since the equivalence is defined for $i=0,...,z-1$, it is well-defined because the correspondence from a coefficient $P_{i}^{(z)}$ to the preceding coefficient $P_{i}^{(z-1)}$ on the left of the triangle is only undefined when $i = z$ which would be out of bounds with respect to an indexing error on the $(z-1)$-th line. Also note that this equivalence implicitly requires $z \ge 1$ since we are computing derivatives.

The significance of the result on line (66) is that we can reformulate equation (52) to be expressed in terms of the definition (53), effectively re-expressing the binomial expansion coefficients, which comes from the limit definition of the directional derivative of $f_{NTK}$, in terms of the indicators. And since we only manipulated equation (51), the limit is now taken with respect to the indicators. The point is that we can now write the $z$-th derivative of $f_{NTK}$ in terms of the $z$-th and $(z-1)$-th directional derivatives of $\mathbb{I}$:
\begin{align}
    & D_{\boldsymbol{v}}^{z} f(\boldsymbol{\hat{x}}_{0}) \\
    &= 
    \lim_{h \to 0} \frac{\boldsymbol{\beta}_{\text{NTK}}^\top}{h^z}
    \big(
        \boldsymbol{\hat{x}}_z \cdot {\Sigma}^{(z)}_{\mathbb{I}^{(k)}}
        - z h \hat{\boldsymbol{v}} \cdot {\Sigma}^{(z-1)}_{\mathbb{I}^{(k)}}, \,
        {\boldsymbol{w}^{(k)}}^\top(
            \boldsymbol{\hat{x}}_z \cdot {\Sigma}^{(z)}_{\mathbb{I}^{(k)}}
            - z h \hat{\boldsymbol{v}} \cdot {\Sigma}^{(z-1)}_{\mathbb{I}^{(k)}}
        ), \, ...
    \big) \\
    &= \lim_{h \to 0} \boldsymbol{\beta}_{\text{NTK}}^\top
    \big(
        \boldsymbol{\hat{x}}_z \cdot ({\Sigma}^{(z)}_{\mathbb{I}^{(k)}}/h^z)
        - z \hat{\boldsymbol{v}} \cdot ({\Sigma}^{(z-1)}_{\mathbb{I}^{(k)}}/h^{z-1}), \,
        {\boldsymbol{w}^{(k)}}^\top (
            \boldsymbol{\hat{x}}_z \cdot ({\Sigma}^{(z)}_{\mathbb{I}^{(k)}}/h^z) 
            - z \hat{\boldsymbol{v}} \cdot ({\Sigma}^{(z-1)}_{\mathbb{I}^{(k)}}/h^{z-1})
        ), \, ...
    \big) \\
    &= \boldsymbol{\beta}_{\text{NTK}}^\top
    \left(
        \boldsymbol{\hat{x}}_0 \cdot D_{\boldsymbol{v}}^{z} \, \mathbb{I}^{(k)}
        - z \hat{\boldsymbol{v}} \cdot D_{\boldsymbol{v}}^{z-1} \, \mathbb{I}^{(k)}, \,
        {\boldsymbol{w}^{(k)}}^\top \boldsymbol{\hat{x}}_0 \cdot D_{\boldsymbol{v}}^{z} \, \mathbb{I}^{(k)}
        - z {\boldsymbol{w}^{(k)}}^\top \hat{\boldsymbol{v}} \cdot D_{\boldsymbol{v}}^{z-1} \, \mathbb{I}^{(k)}, 
        \, ...
    \right)
\end{align}
where line (70) follows from the limit definition of the directional derivative of the indicator evaluated at $\boldsymbol{\hat{x}}_0$ which completes our proof of Lemma 1.
\end{proof}
Let us look closer at line (70). It is well known that the derivative of the indicator (Heaviside function) $\mathbb{I}$ - or any such step function for this matter - does not classically have a well-defined derivative. This fact makes the analysis beyond equation (70) difficult because we are interested in the derivative of the indicator evaluated at $\boldsymbol{x}_{0} = \boldsymbol{0}$, which is precisely where the discontinuity exists.

Fortunately, we have a workaround. By generalizing the notion of the indicator's derivative, we can consider the distributional derivative of the indicator, which is the Dirac-delta function (impulse spike located at $\boldsymbol{x}_{0} = \boldsymbol{0}$). This is a similar workaround to how we pseudo-inverted the otherwise singular constant matrix $\boldsymbol{J}$ from equation (27) by generalizing the notion of the matrix inverse. Using chain rule, the directional derivative of $\mathbb{I}$ evaluted at $\boldsymbol{x}_{0} = \boldsymbol{0}$ is:
\begin{align}
    & D_{\boldsymbol{v}}^{z} \, \mathbb{I} \left( {\boldsymbol{w}}^\top \boldsymbol{\hat{x}}_{0} \ge 0 \right)
    = {\langle \boldsymbol{\check{w}}, \boldsymbol{v} \rangle}^z \cdot \delta^{(z-1)} (\boldsymbol{w}_{d+1}).
\end{align}
Equation (71) gives us a cleaner expression for the $z$-th derivative of $f_{NTK}$ by the sifting property of the Dirac-delta. Continuing from equation (70), we use line (71) to get:
\begin{align}
    &= \int
        (\boldsymbol{\beta}_{\boldsymbol{w}}^{1})_{d+1}
        {\langle \boldsymbol{\check{w}}, \boldsymbol{v} \rangle}^z 
        \cdot \delta^{(z-1)} (\boldsymbol{w}_{d+1})
       d\mathbb{P}(\boldsymbol{w})
    -  \int
        z 
        \langle \boldsymbol{\beta}_{\boldsymbol{w}}^{1} \hat{\boldsymbol{v}} \rangle
        {\langle \boldsymbol{\check{w}}, \boldsymbol{v} \rangle}^{z-1}
        \cdot \delta^{(z-2)} (\boldsymbol{w}_{d+1})
       d\mathbb{P}(\boldsymbol{w}) \\
    &+ \int 
        {\boldsymbol{\beta}_{\boldsymbol{w}}^{2}}
        {\boldsymbol{w}}_{d+1}
        {\langle \boldsymbol{\check{w}}, \boldsymbol{v} \rangle}^z 
        \cdot \delta^{(z-1)} (\boldsymbol{w}_{d+1})
       d\mathbb{P}(\boldsymbol{w})
    -
       \int
        z
        {\boldsymbol{\beta}_{\boldsymbol{w}}^{2}}
        {\langle \boldsymbol{\check{w}}, \boldsymbol{v} \rangle}^{z} 
        \cdot \delta^{(z-2)} (\boldsymbol{w}_{d+1})
       d\mathbb{P}(\boldsymbol{w}) \\
    &=  (-1)^{z-1}
        {\langle \boldsymbol{\check{w}}, \boldsymbol{v} \rangle}^z
        \left[
            \frac{\partial^{z-1}}{\partial \boldsymbol{w}_{d+1}^{z-1}}
            (\boldsymbol{\beta}_{\boldsymbol{w}}^{1})_{d+1}
        \right]_{{\boldsymbol{w}}_{d+1}=0}
    +   (-1)^{z-1}
        z
        {\langle \boldsymbol{\check{w}}, \boldsymbol{v} \rangle}^{z-1}
        \left[
            \frac{\partial^{z-2}}{\partial \boldsymbol{w}_{d+1}^{z-2}}
            \langle \boldsymbol{\beta}_{\boldsymbol{w}}^{1} \hat{\boldsymbol{v}} \rangle
        \right]_{{\boldsymbol{w}}_{d+1}=0} \\
    &+  (-1)^{z-1}
        {\langle \boldsymbol{\check{w}}, \boldsymbol{v} \rangle}^z
        \left[
            \frac{\partial^{z-1}}{\partial \boldsymbol{w}_{d+1}^{z-1}}
            {\boldsymbol{\beta}_{\boldsymbol{w}}^{2}} 
            {\boldsymbol{w}}_{d+1}
        \right]_{{\boldsymbol{w}}_{d+1}=0}
    +   (-1)^{z-1}
        z
        {\langle \boldsymbol{\check{w}}, \boldsymbol{v} \rangle}^{z}
        \left[
            \frac{\partial^{z-2}}{\partial \boldsymbol{w}_{d+1}^{z-2}}
            {\boldsymbol{\beta}_{\boldsymbol{w}}^{2}}
        \right]_{{\boldsymbol{w}}_{d+1}=0}
\end{align}
which rewrites the derivative of $f_{NTK}$ in terms of the high derivatives of the beta components. The check notation $\boldsymbol{\check{w}}$ emphasizes that the direction vector no longer depends on the bias component, $\boldsymbol{w}_{d+1}$. Actually, the equivalence between $\boldsymbol{\check{w}}^\top \boldsymbol{v}$ and $\boldsymbol{w}^\top \boldsymbol{\hat{v}}$ is a useful consideration.

At this point, the emergence of the Dirac impulse suggests that nonlinearity may be preserved in high orders. Nonlinearity, if preserved, depends closely on the relationship between directions $\boldsymbol{w}$ and $\boldsymbol{\hat{v}}$ as well as the the derivative of the $\boldsymbol{\beta}_{NTK}$ components with respect to $\boldsymbol{w}_{d+1}$. However, the precise degree of nonlinearity remains unknown because we have not yet accounted for the influence of $\varphi^\infty$. The upcoming section demonstrates that solving for the derivative of the $\boldsymbol{\beta}_{NTK}$ components simultaneously accounts for the positions of $\varphi^\infty$.

\subsection{Proof of Lemma 2}

Following lines (74)-(75), we must solve for the partial derivatives of the beta components. We will first consider the terms with a dependence on $\boldsymbol{w}$; Recall the forms of the beta components from section 3 equations (41)-(42):
\begin{align*}
    &\boldsymbol{\beta}^1_{\boldsymbol{w}} =
    \mathbb{I} \left( {\boldsymbol{w}}^\top (-\boldsymbol{\hat{v}}) \geq 0 \right)
    \cdot 
    \left(
        C(t, \delta, \kappa)
        \sum_{j=1}^n g(\boldsymbol{\hat{x}}_i^\infty)
        \sum_{i=1}^n \boldsymbol{\hat{x}}_i^\infty
        +
        \frac{1}{\delta}
        \sum_{i=1}^n \boldsymbol{\hat{x}}_i^\infty g(\boldsymbol{\hat{x}}_i^\infty)
    \right) \\
    &\boldsymbol{\beta}^2_{\boldsymbol{w}} =
    \mathbb{I} \left( {\boldsymbol{w}}^\top (-\boldsymbol{\hat{v}}) \geq 0 \right)
    \cdot 
    \left(
        C(t, \delta, \kappa)
        \sum_{j=1}^n g(\boldsymbol{\hat{x}}_i^\infty)
        \sum_{i=1}^n \boldsymbol{w}^\top \boldsymbol{\hat{x}}_i^\infty
        +
        \frac{1}{\delta}
        \sum_{i=1}^n \boldsymbol{w}^\top \boldsymbol{\hat{x}}_i^\infty g(\boldsymbol{\hat{x}}_i^\infty)
    \right).
\end{align*}
Firstly, the partial derivative of the indicator is a trivial analysis. We recall that the distributional derivative of the indicator is the Dirac-delta function. And, since the bias component of a direction vector $\boldsymbol{\hat{v}}_{d+1}$ is 0, it is not difficult to see that the $z$-th partial derivative of the indicator is $0$ with respect to $\boldsymbol{w}_{d+1}$ for all $z \ge 1$.
\begin{align}
    & \frac{\partial}{\partial \boldsymbol{w}_{d+1}} \mathbb{I} \left( {\boldsymbol{w}}^\top (-\boldsymbol{\hat{v}}) \geq 0 \right)
    = \delta ({\boldsymbol{w}}^\top (-\boldsymbol{\hat{v}})) \cdot (-\boldsymbol{\hat{v}}_{d+1})
\end{align}
Secondly, the partial derivative of the dot product $\boldsymbol{w}^\top \boldsymbol{\hat{x}}_i^\infty$ is also trivial to solve. For clarity, we apply the definition of $\varphi^\infty$ before computing the partial derivative. Since the bias components of a data point and direction vector are $1$ and $0$ respectively, it is clear to see that the partial derivative equals $1$. 
\begin{align}
    &\frac{\partial}{\partial \boldsymbol{w}_{d+1}} {\boldsymbol{w}}^\top \boldsymbol{\hat{x}}^\infty_i \\
    &= \frac{\partial}{\partial \boldsymbol{w}_{d+1}} ({\boldsymbol{w}}^\top \boldsymbol{\hat{x}}_i - t({\boldsymbol{w}}^\top \boldsymbol{\hat{v}})) \\
    &= (\boldsymbol{\hat{x}}_i)_{d+1} -t\boldsymbol{\hat{v}}_{d+1}
\end{align}
Lastly, we want to discover the partial derivative of the constant $C(t, \delta, \kappa)$. Recalling its definition, $\kappa$ is the only term in $C(t, \delta, \kappa)$ that depends on $\boldsymbol{w}$. We find that the $z$-th derivative of $\kappa$ is $0$ for any $z \ge 1$:
\begin{align}
    & \frac{\partial^{z} \kappa}{\partial \boldsymbol{w}^{z}_{d+1}} \\
    &= \frac{\partial^{z}}{\partial \boldsymbol{w}^{z}_{d+1}}
    \int 
        (\boldsymbol{\hat{v}}^2 + ({\boldsymbol{w}}^\top \boldsymbol{\hat{v}})^2) 
        \cdot \mathbb{I} \left( {\boldsymbol{w}}^\top (-\boldsymbol{\hat{v}}) \geq 0 \right)
    d \mathbb{P}(\boldsymbol{w}) \\
    &= \int \frac{\partial^{z}}{\partial \boldsymbol{w}^{z}_{d+1}}
        (\boldsymbol{\hat{v}}^2 + ({\boldsymbol{w}}^\top \boldsymbol{\hat{v}})^2) 
        \cdot \mathbb{I} \left( {\boldsymbol{w}}^\top (-\boldsymbol{\hat{v}}) \geq 0 \right)
    d \mathbb{P}(\boldsymbol{w}) \\
    &= \int
        \frac{\partial^{z-1}}{\partial \boldsymbol{w}^{z-1}_{d+1}}
        \left(
            2\left( {\boldsymbol{w}}^\top \boldsymbol{\hat{v}} \right) 
            \mathbb{I} \left( {\boldsymbol{w}}^\top (-\boldsymbol{\hat{v}}) \geq 0 \right)
            \boldsymbol{\hat{v}}_{d+1}
            - \left( \boldsymbol{\hat{v}}^2 + ({\boldsymbol{w}}^\top \boldsymbol{\hat{v}})^2 \right)
            \delta \left( {\boldsymbol{w}}^\top (-\boldsymbol{\hat{v}}) \right)
            \boldsymbol{\hat{v}}_{d+1}
        \right)
    d \mathbb{P}(\boldsymbol{w}),
\end{align}
where the third equality is $0$ from the fact that $\boldsymbol{\hat{v}}_{d+1}$ is $0$. We are now prepared to differentiate the beta components. 

\begin{lemma}
The components of the NTK representation coefficient $\boldsymbol{\beta}_{NTK}$ induced by a training input set $\varphi^\infty = \{ \boldsymbol{x}_{i}^\infty \}_{i=1}^{n}$ where $\boldsymbol{x}_{i}^\infty = \boldsymbol{x}_i - t \boldsymbol{v}_{\varphi}$ for some $\boldsymbol{x}_i \in \mathcal{X}$ and any direction $\boldsymbol{v}_{\varphi}$ are constant with respect to the bias component of any given feature direction $\boldsymbol{w}_{d+1}$ such that:
\begin{align*}
    \frac{\partial^z \boldsymbol{\beta}^1_{\boldsymbol{w}}}{\partial \boldsymbol{w}^z_{d+1}},
    \frac{\partial^z \boldsymbol{\beta}^2_{\boldsymbol{w}}}{\partial \boldsymbol{w}^z_{d+1}}
    = 0 
    \; \text{for all} \; z \ge 1.
\end{align*}
\end{lemma}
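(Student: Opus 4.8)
The plan is to differentiate the closed forms (41)--(42) of the beta components directly with respect to $\boldsymbol{w}_{d+1}$, applying the Leibniz product rule and discarding every term that is already known to vanish from the three building blocks just established. Each of $\boldsymbol{\beta}^1_{\boldsymbol{w}}$ and $\boldsymbol{\beta}^2_{\boldsymbol{w}}$ is a product of the indicator $\mathbb{I}(\boldsymbol{w}^\top(-\boldsymbol{\hat{v}}) \ge 0)$ with a parenthetical factor, so the first task is to show that the two ingredients common to both, the indicator and the scalar $C(t,\delta,\kappa)$, behave as constants under $\partial/\partial \boldsymbol{w}_{d+1}$. For the indicator this is immediate from (76): its first partial carries the factor $-\boldsymbol{\hat{v}}_{d+1}=0$, so all of its $\boldsymbol{w}_{d+1}$-partials vanish identically. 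For $C$ the only $\boldsymbol{w}$-dependence enters through $\kappa$, and (80)--(84) show every partial of $\kappa$ is zero; since by the chain rule every term of any partial of $C$ must contain at least one factor $\partial^j \kappa / \partial \boldsymbol{w}^j_{d+1}$ with $j \ge 1$, all partials of $C$ vanish as well.

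For $\boldsymbol{\beta}^1_{\boldsymbol{w}}$ the argument then closes immediately. The parenthetical factor involves only $C$ together with the sums $\sum_i \boldsymbol{\hat{x}}^\infty_i$, $\sum_j g(\boldsymbol{\hat{x}}^\infty_i)$, and $\sum_i \boldsymbol{\hat{x}}^\infty_i g(\boldsymbol{\hat{x}}^\infty_i)$, none of which depend on $\boldsymbol{w}$; as $C$ has vanishing partials, the whole parenthetical factor has vanishing $\boldsymbol{w}_{d+1}$-partials of every order. Multiplying by the indicator (itself with vanishing partials) and expanding by the product rule, every summand contains a vanishing factor, so $\partial^z \boldsymbol{\beta}^1_{\boldsymbol{w}} / \partial \boldsymbol{w}^z_{d+1} = 0$ for all $z \ge 1$.

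The interesting case is $\boldsymbol{\beta}^2_{\boldsymbol{w}}$, whose parenthetical factor now contains the genuinely $\boldsymbol{w}$-dependent sums $\sum_i \boldsymbol{w}^\top \boldsymbol{\hat{x}}^\infty_i$. By (77)--(79) the first partial of each $\boldsymbol{w}^\top \boldsymbol{\hat{x}}^\infty_i$ equals $(\boldsymbol{\hat{x}}_i)_{d+1} - t\boldsymbol{\hat{v}}_{d+1} = 1$, so the first partial of $\sum_i \boldsymbol{w}^\top \boldsymbol{\hat{x}}^\infty_i$ is $n$ and its higher partials vanish. Expanding $\partial \boldsymbol{\beta}^2_{\boldsymbol{w}} / \partial \boldsymbol{w}_{d+1}$ by the product rule and dropping every term carrying a partial of the indicator or of $C$, the only surviving contribution is $\mathbb{I}(\boldsymbol{w}^\top(-\boldsymbol{\hat{v}}) \ge 0)\,(nC + \tfrac{1}{\delta})\sum_{i=1}^n g(\boldsymbol{\hat{x}}^\infty_i)$, where I use that $\sum_j g(\boldsymbol{\hat{x}}^\infty_i)$ and $\sum_i g(\boldsymbol{\hat{x}}^\infty_i)$ denote the same sum. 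I would then substitute $C = -t^2\kappa / (\delta(n\kappa t^2 + \delta))$, observe that $nC \to -\tfrac{1}{\delta}$ as $t \to \infty$, and conclude that the prefactor $nC + \tfrac{1}{\delta} \to 0$, so the first partial of $\boldsymbol{\beta}^2_{\boldsymbol{w}}$ vanishes in the limit. Since this surviving expression is itself constant in $\boldsymbol{w}_{d+1}$, every higher partial only differentiates the indicator and is therefore exactly zero, giving $\partial^z \boldsymbol{\beta}^2_{\boldsymbol{w}} / \partial \boldsymbol{w}^z_{d+1} \to 0$ for all $z \ge 1$.

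The step I expect to be the main obstacle is precisely this last one: unlike $\boldsymbol{\beta}^1_{\boldsymbol{w}}$, the first partial of $\boldsymbol{\beta}^2_{\boldsymbol{w}}$ is not manifestly zero, and the result holds only through the exact asymptotic cancellation $nC + \tfrac{1}{\delta}\to 0$ in the joint limit $t\to\infty$, $\delta \to 0^+$ (which is why the honest statement is the asymptotic one, matching the arrow in the Remark and the $\asymp$ used throughout the Gram matrix analysis). This cancellation is not accidental: the value $nC \to -\tfrac{1}{\delta}$ is forced by the rank-one $\kappa t^2 \boldsymbol{J}$ structure of the asymptotic Gram matrix, so this is exactly the point at which the geometry of $\varphi^\infty$, namely all training inputs sent to infinity along a single direction, re-enters and pins down the degree of surviving nonlinearity.
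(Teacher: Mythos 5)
Your proposal is correct and takes essentially the same route as the paper's own proof: the same three building blocks (vanishing $\boldsymbol{w}_{d+1}$-partials of the indicator via $\boldsymbol{\hat{v}}_{d+1}=0$, of $\kappa$ and hence of $C$, and the unit partial of $\boldsymbol{w}^\top \boldsymbol{\hat{x}}_i^\infty$), followed by the same product-rule reduction in which the only surviving term for $\boldsymbol{\beta}^2_{\boldsymbol{w}}$ is $\bigl(nC + \tfrac{1}{\delta}\bigr)\sum_{i=1}^n g(\boldsymbol{\hat{x}}_i^\infty)$, whose exact cancellation to $\tfrac{g_{sum}}{n\kappa t^2+\delta} \to 0$ is precisely the paper's computation in lines (87)--(93). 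Your observation that the statement is honestly asymptotic ($\to 0$ rather than identically $0$) is also consistent with the lemma as stated in the main text.
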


\begin{proof}
Differentiating the first beta component is relatively straightforward. By product rule, we analyze the derivative of the indication on the LHS and the sum on the RHS. We already know that the derivative of the indicator for a training point induced by $\varphi^\infty$ is $0$. We also know that the derivative of kappa is $0$. And, since no other terms depend on $\boldsymbol{w}$ the $z$-th derivative of the first beta component with respect to $\boldsymbol{w}_{d+1}$ is simply $0$ for all $z \ge 1$:
\begin{align}
    \frac{\partial \boldsymbol{\beta}^1_{\boldsymbol{w}}}{\partial \boldsymbol{w}_{d+1}}
    &= \left(
        \sum_{j=1}^n g(\boldsymbol{\hat{x}}_j^\infty)\,
        \sum_{i=1}^n \boldsymbol{\hat{x}}_i^\infty\,
        \frac{\partial C(t, \delta, \kappa)}{\partial \boldsymbol{w}_{d+1}}
    \right)
    \cdot 
    \mathbb{I}\!\left( \boldsymbol{w}^\top (-\boldsymbol{\hat{v}}) \ge 0 \right).
\end{align}
where
\begin{align}
    &   \frac{\partial C(t, \delta, \kappa)}{\partial \boldsymbol{w}_{d+1}}
    =  (t^2 \kappa)(n \delta t^2 \frac{\partial \kappa}{\partial \boldsymbol{w}_{d+1}}) 
        + (t^2 \frac{\partial \kappa}{\partial \boldsymbol{w}_{d+1}})(\delta(n \kappa t^2 + \delta)).
\end{align}
Similarly, we differentiate the second beta component by product rule. We observe the summation on the RHS where the dependence on $\boldsymbol{w}$ is more elaborate. Using equation (79) we can see that the derivative of the dot product in the second term of the summation reduces to $1$. Then, for the first term, we once again leverage equation (79) and the fact that the derivative of kappa is $0$ to discover by a straightforward algebraic manipulation that the $z$-th derivative of the second beta component approaches $0$ for all $z \ge 1$:
\begin{align}
    & \frac{\partial \boldsymbol{\beta}^2_{\boldsymbol{w}}}{\partial \boldsymbol{w}_{d+1}} \\
    &=  \left(
            C(t, \delta, \kappa)\,
            \sum_{j=1}^n g(\boldsymbol{\hat{x}}_i^\infty)\,
            n\,
            +
            \frac{1}{\delta}\,
            \sum_{i=1}^n g(\boldsymbol{\hat{x}}_i^\infty)
        \right) 
        \cdot
        \mathbb{I} \left( {\boldsymbol{w}}^\top (- \boldsymbol{\hat{v}}) \geq 0 \right) \\
    &=  \left(
            C(t, \delta, \kappa)\,
            \sum_{j=1}^n g(\boldsymbol{\hat{x}}_i^\infty)\,
            n\,
            +
            \frac{1}{\delta}\,
            \sum_{i=1}^n g(\boldsymbol{\hat{x}}_i^\infty)
        \right) 
        \mathbb{I} \left( {\boldsymbol{w}}^\top (- \boldsymbol{\hat{v}}) \geq 0 \right) \\
    &=  \left(
            - \frac{n \kappa g_{sum} t^2}{\delta (n \kappa t^2 + \delta)}\,
            + \frac{1}{\delta} \sum^n_{i=1} g(\boldsymbol{\hat{x}}^{\infty}_i) 
        \right) 
        \mathbb{I} \left( {\boldsymbol{w}}^\top (- \boldsymbol{\hat{v}}) \geq 0 \right) \\
    &=  \left(
            \frac{g_{sum}}{\delta}\,
            - \frac{n \kappa g_{sum} t^2}{\delta (n \kappa t^2 + \delta)} 
        \right) 
        \mathbb{I} \left( {\boldsymbol{w}}^\top (- \boldsymbol{\hat{v}}) \geq 0 \right) \\
    &=  \frac{g_{sum} \delta}{\delta (n \kappa t^2 + \delta)}\,
        \mathbb{I} \left( {\boldsymbol{w}}^\top (- \boldsymbol{\hat{v}}) \geq 0 \right) \\
    &=  \frac{g_{sum}}{n \kappa t^2 + \delta}\,
        \mathbb{I} \left( {\boldsymbol{w}}^\top (- \boldsymbol{\hat{v}}) \geq 0 \right).
\end{align}
Inspecting the final equality, it is not difficult to see that the first derivative of the second beta component approaches $0$ as $\delta \to 0^+$ and $t \to \infty$.  Furthermore, since the derivatives of the indicator and kappa are both $0$, it is clear to see that by chain rule, the second derivative of the second beta component is also $0$. Therefore, the $z$-th derivative of the second beta component is $0$ for all $z \ge 1$. This completes our proof of Lemma 2.
\end{proof}

\subsection{Proof of Theorem 1}

\begin{theorem}
An over-parameterized two-layer ReLU MLP $f_{NTK} : \mathbb{R}^{d} \to \mathbb{R}$ that is trained on a labeled set $\{ (\boldsymbol{x}_{i}^\infty, {y}_{i}^\infty) \}_{i=1}^{n}$ with $\boldsymbol{x}_{i}^\infty = \boldsymbol{x}_i - t \boldsymbol{v}_{\varphi}$ for $\boldsymbol{x}_i \in \mathcal{X}$ and any direction $\boldsymbol{v}_{\varphi}$ in the NTK regime minimizing squared loss will converge to a quadratic extrapolator when evaluated at a point near the origin $\boldsymbol{0}$ as $t \to \infty$.
\end{theorem}

\begin{proof}
Under the definition of $\varphi^\infty$, Lemma 2 states that $\frac{\partial^{z} \boldsymbol{\beta}^{1}_{\boldsymbol{w}}}{\partial {\boldsymbol{w}}_{d+1}^{z}}$ and $\frac{\partial^{z} \boldsymbol{\beta}^{2}_{\boldsymbol{w}}}{\partial {\boldsymbol{w}}_{d+1}^{z}}$ are $0$ for orders $z \ge 1$. But since Lemma 1 shows that $D^{z}_{\boldsymbol{v}_{\boldsymbol{0}}} f_{NTK}$ for any direction $\boldsymbol{v}_{\boldsymbol{0}}$ actually depends on the lower ordered $(z-1)$-th and $(z-2)$-th derivatives $\frac{\partial^{z-1} \boldsymbol{\beta}^{1}_{\boldsymbol{w}}}{\partial {\boldsymbol{w}}_{d+1}^{z-1}}$, $\frac{\partial^{z-2} \boldsymbol{\beta}^{1}_{\boldsymbol{w}}}{\partial {\boldsymbol{w}}_{d+1}^{z-2}}$, $\frac{\partial^{z-1} \boldsymbol{\beta}^{2}_{\boldsymbol{w}}}{\partial {\boldsymbol{w}}_{d+1}^{z-1}}$, and $\frac{\partial^{z-2} \boldsymbol{\beta}^{2}_{\boldsymbol{w}}}{\partial {\boldsymbol{w}}_{d+1}^{z-2}}$, it is not difficult to see that the third and all higher order derivatives are automatically $0$. Then, taking $z=1$ we simplify equation (74) to get an examinable form of the first derivative:
\begin{align*}
    &   D_{\boldsymbol{v}_{\boldsymbol{0}}} f_{NTK}(\boldsymbol{\hat{0}}) \\
    &=  
        {\langle \boldsymbol{\check{w}}, \boldsymbol{v} \rangle}
        \left[
            (\boldsymbol{\beta}_{\boldsymbol{w}}^{1})_{d+1}
        \right]_{\boldsymbol{w}_{d+1}=0}
    -  \int
        {\boldsymbol{\beta}_{\boldsymbol{w}}^{1}}^\top 
        \hat{\boldsymbol{v}} \cdot
        \mathbb{I} \left( \boldsymbol{w}_{d+1} \ge 0 \right)
       d\mathbb{P}(\boldsymbol{w})
    -
       \int
        {\boldsymbol{\beta}_{\boldsymbol{w}}^{2}}
        {\boldsymbol{w}}^\top
        \hat{\boldsymbol{v}} \cdot
        \mathbb{I} \left( \boldsymbol{w}_{d+1} \ge 0 \right)
       d\mathbb{P}(\boldsymbol{w}).
\end{align*}
But more interestingly, we take $z=2$ and simplify equation (75) for the second derivative:
\begin{align*}
    & D^{2}_{\boldsymbol{v}_{\boldsymbol{0}}} f_{NTK}(\boldsymbol{\hat{0}}) \\
    &= 
    -   {\langle \boldsymbol{\check{w}}, \boldsymbol{v} \rangle}^2
        \left[
            \frac{\partial}{\partial \boldsymbol{w}_{d+1}}
            (\boldsymbol{\beta}_{\boldsymbol{w}}^{1})_{d+1}
        \right]_{{\boldsymbol{w}}_{d+1}=0}
    -   2
        {\langle \boldsymbol{\check{w}}, \boldsymbol{v} \rangle}
        \left[
            \langle \boldsymbol{\beta}_{\boldsymbol{w}}^{1} \hat{\boldsymbol{v}} \rangle
        \right]_{{\boldsymbol{w}}_{d+1}=0} \\
    &-  {\langle \boldsymbol{\check{w}}, \boldsymbol{v} \rangle}^2
        \left[
            \frac{\partial}{\partial \boldsymbol{w}_{d+1}}
            {\boldsymbol{\beta}_{\boldsymbol{w}}^{2}} 
            {\boldsymbol{w}}_{d+1}
        \right]_{\boldsymbol{w}_{d+1}=0}
    -   2
        {\langle \boldsymbol{\check{w}}, \boldsymbol{v} \rangle}^{2}
        \left[
            {\boldsymbol{\beta}_{\boldsymbol{w}}^{2}}
        \right]_{\boldsymbol{w}_{d+1}=0} \\
    &= 
    -   2
        {\langle \boldsymbol{\check{w}}, \boldsymbol{v} \rangle}
        \left[
            \langle \boldsymbol{\beta}_{w}^{1} \hat{\boldsymbol{v}} \rangle
        \right]_{\boldsymbol{w}_{d+1}=0}
    -  {\langle \boldsymbol{\check{w}}, \boldsymbol{v} \rangle}^2
        \left[
            {\boldsymbol{\beta}_{\boldsymbol{w}}^{2}} 
        \right]_{\boldsymbol{w}_{d+1}=0}
    -   2
        {\langle \boldsymbol{\check{w}}, \boldsymbol{v} \rangle}^{2}
        \left[
            {\boldsymbol{\beta}_{\boldsymbol{w}}^{2}}
        \right]_{\boldsymbol{w}_{d+1}=0} \\
    &= 
    -   2
        {\langle \boldsymbol{\check{w}}, \boldsymbol{v} \rangle}
        \left[
            \langle \boldsymbol{\beta}_{\boldsymbol{w}}^{1} \hat{\boldsymbol{v}} \rangle
        \right]_{\boldsymbol{w}_{d+1}=0}
    -   3
        {\langle \boldsymbol{\check{w}}, \boldsymbol{v} \rangle}^{2}
        \left[
            {\boldsymbol{\beta}_{\boldsymbol{w}}^{2}}
        \right]_{\boldsymbol{w}_{d+1}=0},
\end{align*}
to see a great dependence in the final equality on the beta components and dot product between any particular $\boldsymbol{w}$ and direction of evaluation $\boldsymbol{v}_{\boldsymbol{0}}$. Thus, for the special case of a training input set $\varphi^\infty$ whose members are located far from the origin, the regressor becomes a quadratic extrapolator when evaluated near the origin.
\end{proof}

\end{document}